%%%%%%%%%%%%%%%%%%%%%%%%%%%%%%%%%%%%%%%%%%%%%%%%%%%%%%%%%%%%%%%%%%%%%%%%%%%%%%%%
%2345678901234567890123456789012345678901234567890123456789012345678901234567890
%        1         2         3         4         5         6         7         8

\documentclass[letterpaper, 10 pt, conference]{ieeeconf}  % Comment this line out if you need a4paper

\IEEEoverridecommandlockouts                              % This command is only needed if 
                                                          % you want to use the \thanks command

\overrideIEEEmargins                                      % Needed to meet printer requirements.

%In case you encounter the following error:
%Error 1010 The PDF file may be corrupt (unable to open PDF file) OR
%Error 1000 An error occurred while parsing a contents stream. Unable to analyze the PDF file.
%This is a known problem with pdfLaTeX conversion filter. The file cannot be opened with acrobat reader
%Please use one of the alternatives below to circumvent this error by uncommenting one or the other
%\pdfobjcompresslevel=0
%\pdfminorversion=4

% See the \addtolength command later in the file to balance the column lengths
% on the last page of the document

% The following packages can be found on http:\\www.ctan.org
%\usepackage{graphics} % for pdf, bitmapped graphics files
%\usepackage{epsfig} % for postscript graphics files
%\usepackage{mathptmx} % assumes new font selection scheme installed
%\usepackage{times} % assumes new font selection scheme installed
%\usepackage{amsmath} % assumes amsmath package installed
%\usepackage{amssymb}  % assumes amsmath package installed

\usepackage{amsthm}
\usepackage[utf8]{inputenc}
\usepackage{amsmath,amssymb}
\usepackage{booktabs}
\usepackage[font=small]{subcaption}
\usepackage{numprint}
\usepackage{csvsimple}
\usepackage{comment}
\usepackage{amsfonts}
\usepackage{subcaption}
\usepackage{cite}
\usepackage{multirow}

\newtheorem{theorem}{Theorem} %[section]

\newtheorem{lemma}[theorem]{Lemma}
 %[section]
\newtheorem{assumption}{Assumption}
\usepackage[linesnumbered,ruled,noend]{algorithm2e}
\usepackage{algpseudocode}
\usepackage{graphicx}
\usepackage{xcolor}
\usepackage{eso-pic}

\newcommand{\positivereals}{\mathbb{R}_{>0}}
\newcommand{\safe}{\mathrm{safe}}

% \title{\LARGE \bf
% IMPETUS: Integrated Motion Planning and Event Triggered Estimation which USually works
% }

\title{\LARGE \bf
Chance-Constrained Motion Planning with Event-Triggered Estimation
}

%\title{\LARGE \bf
%Chance Constrained Motion Planning with Event Triggered Estimation}

\author{Anne Theurkauf, Qi Heng Ho, Roland Ilyes, Nisar Ahmed, and Morteza Lahijanian% <-this % stops a space
\thanks{This work was supported by NASA STTR award 80NSSC20C0314.}
\thanks{Authors are with the department of Aerospace Engineering Sciences at the University of Colorado Boulder, CO, USA
        {\tt\small \{\textit{firstname}.\textit{lastname}\}@colorado.edu}}
}

\begin{document}

\AddToShipoutPictureBG*{%
  \AtPageUpperLeft{%
    \hspace{16.5cm}%
    \raisebox{-1.5cm}{%
      \makebox[0pt][r]{To appear in the IEEE Int'l. Conference on Robotics and Automation (ICRA), 2023.}}}}

\maketitle
% \thispagestyle{plain}
% \pagestyle{plain}

%%%%%%%%%%%%%%%%%%%%%%%%%%%%%%%%%%%%%%%%%%%%%%%%%%%%%%%%%%%%%%%%%%%%%%%%%%%%%%%%
% Abstract
\begin{abstract}
    We consider the problem of motion and communication planning under uncertainty
    %in hostile environments
    with limited information from a remote sensor network. Because the remote sensors are power and bandwidth limited, we use event-triggered (ET) estimation to manage communication costs. We introduce a fast and efficient sampling-based planner which computes 
    motion plans coupled with ET communication strategies that minimize communication costs, while satisfying constraints on the probability of reaching the goal region and the point-wise probability of collision. We derive a novel method for offline propagation of the expected state distribution, and corresponding bounds on this distribution. These bounds are used to evaluate the chance constraints in the algorithm. Case studies establish the validity of our approach and demonstrate computational efficiency and asymptotic optimality of the planner. 
\end{abstract}

%%%%%%%%%%%%%%%%%%%%%%%%%%%%%%%%%%%%%%%%%%%%%%%%%%%%%%%%%%%%%%%%%%%%%%%%%%%%%%%%
\section{INTRODUCTION}
    \label{sec:intro}
    % introduction

% Autonomous systems are especially well adapted to applications in remote environments. However, these situations introduce strict requirements on operation: because of inaccessibility, these systems must be able to operate both efficiently and safely. Unfortunately, these are often competing objectives, better performance requires greater expense of resources. This problem is especially exacerbated in the context of communication among networked assets. If assets frequently communicate measurements, then their state estimates improve and their probability of safety increases. However, more communication can lead to unacceptable resource consumption. 

As robots become more capable, they also become more adept at autonomously exploring remote environments, especially those that are hostile to humans.  Examples include deep ocean, planetary, and subterranean exploration. Because of inaccessibility and limited resources, these robots must be able to operate \emph{efficiently} and \emph{safely}.  However, these are often \emph{competing} objectives, i.e., safer performance requires more resources \cite{Lahijanian:RAL:2018}. This problem is exacerbated when robots are part of a network: frequent information communication increases the probability of safely completing the mission, but can also lead to unacceptable resource consumption. This work focuses on this problem and aims to develop a framework for robot motion and communication planning that guarantees safety while minimizing resource cost. 

Consider the case of a robot in a lunar exploration scenario with a network of remote sensors. The robot is tasked to navigate to a scientifically interesting location and avoid craters and boulders. The robot relies on the sensor network for localization, but the remote sensors have limited battery life, and transmitting information is costly. 
% (e.g., transmission is an order of magnitude more costly than any other function on a Zigbee networking chip \cite{Shi2014:ML}
%\ml{didn't a reviewer say this system example is old and not applicable anymore?}) 
The network must also limit communication due to bandwidth constraints. Hence, the robot should find a motion and communication plan that minimizes the resource cost on the sensors while guaranteeing safety.
%, i.e., avoiding craters and boulders and reaching the goal location.

A ubiquitous technique to manage communication cost is event-triggered (ET) estimation \cite{Trimpe2015, Ouimet2018, Shi2014, Wu2013}, where communication only occurs when the information is deemed useful. This typically takes the form of a threshold on some useful quantity, e.g., the Kalman Filter (KF) innovation \cite{Ouimet2018, Shi2014, Wu2013}. ET allows the user to lower resource consumption by changing this threshold, thus reducing the volume of transmitted information and trading off estimation accuracy with resource use. Work \cite{Theurkauf2022} introduces a method that guarantees optimal trade-offs by accounting for both task performance and resources. While providing a comprehensive Pareto Front and optimal ET thresholds, that method does not generate a motion plan; it also requires heavy computation power, resulting in lengthy computation times and precluding onboard implementation.

Sampling-based motion planners are well established tools that can rapidly find solutions to complex problems \cite{Elbanhawi2014,LaValle1999,Luders2010,Blackmore2011,Li2016SST}. The first developed planners deal with deterministic dynamics and operate solely in the state space \cite{LaValle1999, KavrakiPRM1996}. Later formulations extend those algorithms to obtain asymptotically-optimal planners such as SST and SST* \cite{Li2016SST}, and incorporate motion uncertainty via chance constraints on the probability of collision (e.g., CC-RRT), \cite{Luders2010,Blackmore2011, pairet2021}. These techniques have recently been extended to accommodate measurement uncertainty with Gaussian belief trees \cite{Ho2022, Bry2011}, and feedback-based information roadmaps \cite{firm2014}. However, while these fast and efficient methods have been extended to accommodate other uncertainties,
%\ri{consider mentioning that they assume constant communication which motivates your approach} 
no sampling-based motion planning algorithm currently exists that incorporates ET estimation.

In this paper, we develop a sampling-based algorithm for \textit{motion and ET-communication} (METC) planning that, for a given system and environment, generates both a motion and ET estimation threshold plan. This algorithm is fast and efficient, and provides guarantees on the point-wise probability of collision and reaching the goal (i.e., safety constraints), while minimizing communication cost. We first derive an offline method to predict the expected state distribution, and provide the corresponding bounds for this distribution. Using these results, we then derive a sampling-based algorithm that generates the METC plans. We show the efficacy of our methods in several case studies. The results show that we can successfully generate plans for a variety of environments, these plans are valid with respect to the safety constraint, and optimality increases with increased computation time. We additionally compare the results of our algorithm to a Pareto optimal strategy in \cite{Theurkauf2022}, showing that we can generate plans with similar performance with orders of magnitude smaller computation time.

In summary, our contributions are: (i) a derivation of propagation equations for the distribution across states under ET estimation, (ii) a derivation of the bounds for these distributions, (iii) an algorithm for generating METC plans 
% for a given system and environment 
that minimize communication costs while respecting a safety constraint, and (iv) 
% case studies demonstrating the validity of the algorithm.
a series of case studies and benchmarks that demonstrate the algorithm's efficiency and optimality.

\section{PROBLEM FORMULATION}
    \label{sec:problem}
    % problem formulation
We consider a scenario where a lunar robot must navigate to a scientifically interesting goal region while avoiding obstacles. The robot receives measurements from resource and bandwidth constrained remote sensors. The goal is to generate METC plans that respect a safety constraint on obstacle collision and reaching the goal region, while simultaneously minimizing communication cost. 
% We achieve this through a belief-based motion planner.
Below, we formalize this problem.

\subsection{Robot Motion and Remote Sensor Models}
The motion of the robot is uncertain and described by 
\begin{equation}
\label{eq:sysDyn}
    x_{k+1} = A x_k + B u_k + w_k, \quad w_k \sim \mathcal{N}(0,Q),
\end{equation}
where $x_k\in \mathcal{X} \subseteq \mathbb{R}^n$ is the state, $u_k\in \mathcal{U}\subseteq\mathbb{R}^p$ is the control, $A\in\mathbb{R}^{n\times n}$ and $B\in\mathbb{R}^{n\times p}$, and $w_k \in \mathbb{R}^n$ is a random variable that represents a zero-mean Gaussian distributed noise with covariance $Q\in\mathbb{R}^{n\times n}$.

The robot receives measurements from a remote sensor network (e.g. beacons that provide one-way ranging measurements). We assume that the robot communicates with the closest sensor, and therefore only fuses one measurement at a time. For ease of presentation, we assume that the remote sensors are identical, but we emphasize that extending to different models is trivial. Hence, the sensor network can be represented as a single sensor with model
\begin{equation}
\label{eq:sysMeas}
    y_k = Cx_k + v_k,\quad v_k \sim \mathcal{N}(0,R),
\end{equation}
where $y_k\in \mathbb{R}^m$ is the measurement, $C\in\mathbb{R}^{m\times n}$, and $v_k \in \mathbb{R}^m$ is a random variable that represents zero-mean Gaussian distributed sensor noise with covariance $R\in\mathbb{R}^{m\times m}$.

The initial state of the robot is described by a Gaussian distribution $x_0 \sim \mathcal{N}(\hat{x}_0,\Sigma_0)$, with mean $\hat{x}_0 \in \mathbb{R}^n$ and covariance $\Sigma_0 \in \mathbb{R}^{n \times n}$. We assume the robot is fully controllable and observable, and that the covariance matrices $Q$ and $R$ are positive definite.

\subsection{Event-triggered Estimation}
Due to process and measurement noise, the robot's true state is unknown. Hence, an estimator maintains a probability distribution over the states, $x_k \sim b(x_k)$, called the \emph{belief}. Note that this is an online estimate conditioned on observed measurements. To conserve communication costs, the system operates with the KF innovation-based ET estimator presented in \cite{Wu2013}, where measurements are only communicated when they are ``surprising''. That work derives a recursive minimum mean square error (MMSE) estimator under the assumption that the belief is Gaussian (see Sec.~\ref{sec:ETfilter} for details). For a given a threshold $\delta_k \in \Delta = \mathbb{R}_{> 0}$, measurement $y_k$ is \textit{surprising} if the norm of the Mahalanobis (whitening) transformation of $z_k = y_k - C \hat{x}_k$ is larger than threshold $\delta_k$.

%In this scheme, a surprising measurement is defined based on the KF innovation $z_k = y_k - C \hat{x}_k$. That is, given a threshold $\delta_k \in \Delta = \mathbb{R}_{> 0}$, measurement $y_k$ is \textit{surprising} if the norm of the Mahalanobis (whitening) transformation of $z_k$ is larger than threshold $\delta_k$.
% Work \cite{Wu2013} develops a recursive minimum mean square error (MMSE) estimator under the assumption that the belief is Gaussian, which we detail in Sec.~\ref{sec:ETfilter}.

We define $\gamma_k \in \{0,1\}$ to be the triggering indicator, i.e., $\gamma_k=1$ if the measurement is sent, and $\gamma_k=0$ otherwise. 
% A trigger occurs when the whitened KF innovation exceeds the triggering threshold, $\delta\in\mathbb{R}_{>0}$, i.e., when a measurement is `surprising' enough.
When $\gamma_k = 0$, the robot is \emph{implicitly} informed that $y_k$ is not surprising; the key advantage of ET filter is the exploitation of this information to improve state estimation.  
% As opposed to solely relying on the \emph{a priori} estimate in the absence of a measurement, the filter uses the fact that the whitened innovation must exceed the threshold $\delta$ to yield a better estimate. 
In this framework, the threshold $\delta_k$ is a design parameter, which trades off estimation accuracy with resource cost. We seek a method for optimally setting this threshold.
% , i.e., computing $\Delta^T = \{\delta_0, \delta_2, \ldots, \delta_T\}$, where $T \in \mathbb{N}$ is the time it takes to complete a given mission. 

\subsection{Controller}
The robot is equipped with a trajectory following controller. Given a nominal trajectory as a sequence of nominal control inputs $\check{U}_0^T=(\check{u}_0, \ldots, \check{u}_T)$ and nominal states $\check{X}_0^T=(\check{x}_0, \ldots, \check{x}_T)$, the feedback controller is: $u_k = \check{u}_{k-1} - K(\hat{x}_k - \check{x}_k)$, where $K$ is the controller gain. Under this controller, the closed-loop system dynamics become: 
$$x_k = Ax_{k-1}+B(\check{u}_{k-1} - K(\hat{x}_k - \check{x}_k)) + w_k.$$ 
The goal of this work is to compute the nominal trajectory $(\check{U}_0^T, \check{X}_0^T)$ along with the sequence of ET estimation thresholds $\Delta_0^T=(\delta_0, ..., \delta_T)$ that satisfy the mission objectives and constraints described below.
We define a \textit{motion and ET-communication (METC) plan} to be $(\check{U}_0^T, \check{X}_0^T, \Delta_0^T )$.

\subsection{Mission Objectives}
% The goal is to compute a nominal plan, containing a nominal trajectory, nominal control, and associated $\delta$ sequence, to satisfy constraints on the probability of collision and reaching the target, and to minimize resource consumption. These three objectives are defined in this section.
The mission consists of three objectives: respect constraints on the probability of reaching the goal and avoiding obstacles, and minimize resource consumption. Here, we formalize these objectives.

The environment contains a set of obstacles, $\mathcal{X}_{obs} \subset \mathbb{R}^n$, and a goal region, $\mathcal{X}_{goal} \subset \mathbb{R}^n$. The probability of collision at time step $k$ is defined as:
\begin{equation}
\label{eq:Pcoll}
    P(x_k \in \mathcal{X}_{obs}) = \int_{\mathcal{X}_{obs}} b(x_k)(s) ds,
\end{equation}
where $b(x_k)(s)$ is the distribution $b(x_k)$ evaluated at state $s\in\mathcal{X}$. The probability of terminating in the goal region is:
\begin{equation}
\label{eq:Pgoal}
    P(x_T \in \mathcal{X}_{goal}) = \int_{\mathcal{X}_{goal}} b(x_T)(s)ds,
\end{equation}
where $x_T$ is the terminal point on a trajectory.

% Finally, we seek to minimize the communication cost, which is defined in terms of the cost of transmitting measurements. 
The %goal of the 
third objective is to conserve resources for the remote sensors. 
The cost of transmitting a single measurement at time step $k$ is $c_m \gamma_k$, where $c_m >0$ is a constant. %, where $c_{m} \in \mathbb{R}_{>0}$ is the energy of transmission of a single measurement. 
Since trigger $\gamma_k$ depends on the triggering threshold $\delta_k$, the \emph{total communication cost} for a trajectory with $T$ time steps and $\Delta_0^T = (\delta_0,\ldots,\delta_T)$ is $\mathcal{J}_T(\Delta_0^T) = \sum_{k=1}^{T} c_{m}\gamma_k$. 
% Because we are reasoning in the belief space,
Note that $\gamma_k$ is a random variable; hence, 
the total cost is considered in expectation: 
\begin{equation}
\label{eq:costfunction}
    J_T(\Delta_0^T) = \mathbb{E}[\mathcal{J}_T(\Delta_0^T)]=\sum_{k=1}^T c_{m}\mathbb{E}[\gamma_k] = \sum_{k=1}^T c_{m}\Gamma(\delta_k)
\end{equation}
where $\Gamma(\delta_k)$ is the expected triggering rate for $\delta_k$.
Because we focus on sensor resources, \eqref{eq:costfunction} contains only communication cost, but we emphasize that it can be easily extended to include terms for path length and control effort. 

\subsection{Problem Statement}
\label{sec:problemstatement}
Given a robot with dynamics in \eqref{eq:sysDyn}, sensor network with measurement model in \eqref{eq:sysMeas}, set of obstacles $\mathcal{X}_{obs}$, goal region $\mathcal{X}_{goal}$, and safety probability bound $p_{\safe} \in [0,1]$, compute 
% nominal control $\check{U}_0^T$, its corresponding trajectory $\check{X}_0^T$, and thresholds $\Delta_0^T$ that minimizes the cost $J_T$:
an optimal METC plan $(\check{U}_0^T, \check{X}_0^T, \Delta_0^T)^*$ that minimizes the expected total communication cost, i.e., 
\begin{equation}
    \label{eq:cost minimization problem}
    (\check{U}_0^T,\check{X}_0^T,\Delta_0^T)^* = \arg \min_{\check{u}_k,\check{x}_k, \delta_k, T}  J_T(\Delta_0^T)
\end{equation}
subject to the following constraints on the point-wise probability of collision and reaching the goal:
\begin{align}
    &P(x_k \in \mathcal{X}_{obs}) < 1-p_{\safe}, \quad \forall k \in [0,T] \label{eq:cc-obs} \\
    &P(x_T \in \mathcal{X}_{goal}) > p_{\safe}. \label{eq:cc-goal}
\end{align}

The key challenge is accounting for the uncertainty introduced by ET estimation, which requires forecasting state distributions over both unknown measurements and unknown triggers. Measurement uncertainty in a KF is accounted for in \cite{Bry2011}; however, there are no existing methods to forecast uncertainty over a triggering condition. Straightforward attempts to extending \cite{Bry2011} to ET quickly run into problems of inter-dependency between variables and exponentially exploding belief trees. We address these challenges by developing a method to propagate bounds on the state distributions under a given choice of $\delta_k$ and using these to check the safety constraint. We then develop a planning algorithm by integrating these methods with the sampling-based algorithm in \cite{Ho2022} to generate METC plans that asymptotically minimize communication costs while respecting safety constraints.

\section{PRELIMINARIES}
    \label{sec:preliminaries}
    
%%%%%%%%%%%%%%%%%%%%%%%%%%%%%%%%%%%%%%%%%%%%%%%%%%%%
\subsection{Belief Prediction under Kalman Filter}
\label{sec:KFpropagation}

For a linearizable and controllable system operating under a standard Kalman Filter, \cite{Bry2011} presents a method to forecast the belief over the state while accounting for the fact that the measurements are unknown random variables \emph{a priori}. This \emph{expected belief}, $\textbf{b}(x_k)$, is defined with respect to all possible measurements as:
\begin{align}
    \label{eq: expected belief}
    \textbf{b}(x_k) = \mathbb{E}_Y[b(x_k \mid x_0, y_{0:k})]
    = \int_{y_{0:k}} \hspace{-3mm} b(x_k \mid x_0, y_{0:k}) pr(y_{0:k})dy \nonumber
\end{align}
This forecast enables the evaluation of the chance constraints.

For a given nominal trajectory, $\check{X}_0^T=(\check{x}_0, \ldots, \check{x}_T)$,  the expected belief $\textbf{b}(x_k)=\mathcal{N}(\check{x}_k, \Sigma_k+\Lambda_k)$, can be recursively calculated from an initial belief $b(x_0)$ using the belief propagation method from \cite{Bry2011}:
\begin{align}
    \Sigma_k^- &= A\Sigma_{k-1}A^T + Q, \quad \Sigma_k = \Sigma_k^- - L_k C\Sigma_k^-, \\
    \Lambda_k &= (A-BK)\Lambda_{k-1}(A-BK)^T + L_k C \Sigma_k^-,
\end{align}
where $\Sigma_k$ is the online uncertainty given by the KF, and $\Lambda_k$ is covariance of the forecasted state estimates $\hat{x}_k$ (note that $\hat{x}_k$ is a random variable offline). Intuitively, this distribution can be thought of as the sum of the online estimation error and the forecasted uncertainty from not-yet-known measurements that the system receives during execution.  In this work, we develop a method of forecasting belief under an ET Filter.

%%%%%%%%%%%%%%%%%%%%%%%%%%%%%%%%%%%%%%%%
%%
\subsection{MMSE Filter for Event-triggered Estimation}
\label{sec:ETfilter}

For state estimation,
we use the triggering scheme described in \cite{Wu2013}, which is based on the KF innovation. Recall that the trigger, $\gamma_k$, depends on the triggering threshold $\delta_k$, and that $\gamma_k = 1$ indicates that measurement $y_k$ is sent, and $\gamma_k=0$ otherwise. According to the MMSE estimator in \cite{Wu2013}, the estimate of the state is Gaussian with \textit{a priori} update:
\begin{equation}
\label{eq:ETaprioriUpdate}
    \hat{x}_k^- = A \hat{x}_{k-1} + B u_k, \quad \Sigma_k^- = A \Sigma_{k-1} A^T + Q,
\end{equation}
and \textit{a posteriori} update given by:
\begin{align}
\label{eq:ETaposterioriUpdate}
    &\hat{x}_k = \hat{x}_k^- + \gamma_k L_k z_k, \quad z_k = y - C\hat{x}_k^- \\
    % &\Sigma_k = \gamma_k g(\Sigma_k^-,1) + (1-\gamma_k)g(\Sigma_k^-, \beta(\delta)).
    &\Sigma_k =  \Sigma_k^- - \Big[\gamma_k + (1-\gamma_k) \beta(\delta_k) \Big] L_k C\Sigma_k^-.
\end{align}
where 
% $g(\Sigma_k, {\beta(\delta)}) = \Sigma_k - \beta(\delta) L_k\Sigma_k C$, and 
$L_k=\Sigma_k^- C^T[C \Sigma_k^-C^T + R]^{-1}$ is the KF gain. The term ${\beta(\delta_k)}$ is a scalar multiplier that effectively attenuates the KF gain in the covariance update as a function of $\delta_k$, and is given by
\begin{equation}
\beta(\delta) = \frac{2}{\sqrt{2\pi}}\delta e^{-\frac{\delta^2}{2}}\left[ 1 - 2\mathcal{Q}(\delta) \right]^{-1},
\end{equation}
where
\begin{equation}
    \mathcal{Q}(\delta) \triangleq \int_\delta^{+\infty} \frac{1}{\sqrt{2\pi}} e^{-\frac{x^2}{2}}dx.
\end{equation}

The triggering condition takes the form: 
\begin{equation}
    \gamma_k = 
    \begin{cases}
    0 & \text{if $\|\epsilon_k\|_\infty\leq\delta_k$}\\
    1 & \text{otherwise},
    \end{cases}
\end{equation}
where 
% $\delta_k \in \positivereals$ is the triggering threshold, and 
$\epsilon_k$ is the Mahalanobis (whitening) transformation of the KF innovation, $z_k$.
% Now, consider the expectation defined in \eqref{eq:ExpectedCost}. 
Note that, because of the whitening transformation, $\epsilon_k$ is always distributed as a standard normal.  Hence, the expected value of $\gamma_k$
is solely dependent on $\delta_k$ and is given by $\Gamma(\delta_k)= 1-\left[ 1 - 2\mathcal{Q}(\delta_k)\right]^m$, where $m$ is the dimension of the measurement vector.

% \section{Belief Prediction under ET Filter}
\section{BELIEF PREDICTION UNDER ET FILTER}
\label{sec:ET Bounds}
In this section, we present a novel method to predict the state distribution offline under an ET filter. The resulting distribution can be used to check the validity of the safety constraints.  Based on this method, we devise the planning algorithm in Sec. \ref{sec:ET-GBT}.

A key requirement for the method of belief prediction in Sec.~\ref{sec:KFpropagation} is that the distribution over state estimates, $\hat{x}_k$, is Gaussian, and defined by $\hat{x}_k \sim \mathcal{N}(\check{x}_k, \Lambda_k)$. When this distribution is known, it can be used to represent the joint distribution over $x_k$ and $\hat{x}_k$, which can in turn be marginalized to obtain the expected distribution over states, $x_k \sim \textbf{b}(x_k)=\mathcal{N}(\check{x}_k,\Sigma_k+\Lambda_k)$. This marginal is then used to evaluate the chance constraints in \eqref{eq:cc-obs} and \eqref{eq:cc-goal} offline.

Unfortunately, under ET estimation the covariance is updated under two randomly switching modes (according to $\gamma_k=0$ or $\gamma_k=1$), which means that the methods described in Sec. III.A cannot be directly applied to ET estimation. If the trigger $\gamma_k$ is treated as an unknown random variable, then the expected belief must be taken with respect to all possible $\gamma_k$, and is not Gaussian. However, if $\gamma_k$ is assumed to be given, then the distribution over $\hat{x}_k$ is Gaussian. This can be seen by examining the ET estimation update in \eqref{eq:ETaposterioriUpdate}: when $\gamma_k$ is given, the conditional dependency of $\hat{x}_k$ on $\gamma_k$ vanishes and the Gaussian property is preserved during the update. The mean and covariance can be obtained by evaluating $\mathbb{E}[\hat{x}_k]=\check{x}_k$ and $\mathbb{E}[(\hat{x}_k-\check{x}_k)(\hat{x}_k-\check{x}_k)^T]=\Lambda_k$ respectively. The resulting distribution $x_k \sim \mathcal{N}(\check{x}_k,\Sigma_k+\Lambda_k)$ can then be calculated, with $\Sigma_k$ given by the ET update in \eqref{eq:ETaposterioriUpdate} and $\Lambda_k$ given by: 
\begin{align}
\label{eq:ETcovupdate}
    \Lambda_k &= (A-BK)\Lambda_{k-1}(A-BK)^T + \gamma_k L_k C \Sigma_k^-.
\end{align}
The assumption that $\gamma_k$ is given is significant. Instead of taking the expected belief with respect to all possible $\gamma_k$, we assume a realization of some specific sequence of $\gamma_k$. However, if we are only concerned with bounding the probability of being within some region, this assumption can be easily accommodated. We simply need to determine the triggering condition that results in the expected belief with the highest, or lowest, probability of being in that region. 

While the expectation of triggering at any given time step is  easy to calculate, the corresponding expected belief is not. However, because we are considering a chance constraint, it is sufficient to only consider an upper bounding belief, which  subsumes all covariances produced by any triggering condition. This bound can be used for collision checking as described in Section \ref{sec:validityChecking} to guarantee satisfaction of the chance constraints.
% }
%
%To obtain these bounds, 
We use a bounding method similar to the one described in \cite{Li2018}. 

Let the scalar constants $\underline{a}, \bar{a}, \underline{k}, \bar{k}, \underline{c}, \bar{c},\underline{q}, \bar{q}$, $\underline{r}, \bar{r} \in \mathbb{R}_{>0}$ define bounds such that
\begin{equation}
\label{eq:SysBoundAssumptions}
\begin{split}
    &\underline{a}^2 I_n \leq A A^T \leq \bar{a}^2 I_n, \qquad \underline{c}^2 I_m \leq C C^T \leq \bar{c}^2 I_m, \\
    % \nonumber
    &\underline{k}^2 I_n \leq (A -BK)(A-BK)^T \leq \bar{k}^2, \\
    % \nonumber
    &\underline{q} I_n \leq Q \leq \bar{q} I_n,  \qquad \qquad\; \underline{r}I_m \leq R \leq \bar{r} I_m,
    % \nonumber
\end{split}
\end{equation}
where $I_m$ is the $m\times m$ identity matrix, and inequality relation $\textit{X} \geq \textit{Y}$ for square matrices $\textit{X}$ and $\textit{Y}$ implies $\textit{X}-\textit{Y}$ is positive semi-definite. 
Note that the existence of these positive bounds requires that the eigenvalues of $A$, $(A-BK)$, and $CC^T$ be real.  Based on these parameters, we derive an upper bound for the covariance of the belief in the theorem below.

\begin{theorem}
    \label{theorem:covariance-bound}
    Consider the predicted belief for ET estimation given by $\textbf{b}(x_k)=\mathcal{N}(\check{x}_k,\Sigma_k+\Lambda_k)$, where the covariance $\Sigma_k+\Lambda_k$ is recursively updated according to \eqref{eq:ETcovupdate}. Then, this covariance can be recursively bounded by
    \begin{equation}
        \label{eq:upperboundcovariance}
        \Lambda_k + \Sigma_k \leq (\bar{\lambda}_k + \bar{p}_k) I_n
    \end{equation}
    where $\bar{\lambda}_k,\bar{p}_k \in \mathbb{R}_{\geq 0}$ are given by
    \begin{align}
        % \label{eq:upperboundlambda}
        &\bar{\lambda}_k = \bar{k}^2\bar{\lambda}_{k-1} + \frac{\bar{c}^2\left( \bar{p}_{k-1}\bar{a}^2 + \bar{q} \right)^2}{\underline{c}^2\left( \underline{p}_{k-1}\underline{a}^2 + \underline{q} \right) + \underline{r}} \nonumber\\
        % \label{eq:upperboundsigma}
        &\bar{p}_k = \left( \frac{1}{\bar{p}_{k-1}\bar{a}^2 + \bar{q}} + \frac{\beta\underline{c}^2}{\bar{r} + \left( 1-\beta \right)\bar{c}^2\left( \bar{a}^2\bar{p}_{k-1}+\bar{q}\right) } \right)^{-1}
        \nonumber
    \end{align}
with $\underline{p}_k= \left( 1/\underbar{q} + \bar{c}^2/\underbar{r} \right)^{-1}$.
The bounds are initialized as:
\begin{align*}
    &\bar{\lambda}_0 = \max \left(Eigenval\left(\Lambda_0\right)\right), \\
    &\bar{p}_0 = \max\left(Eigenval\left(\Sigma_0\right)\right), \; \underline{p}_0 = \min\left(Eigenval\left(\Sigma_0\right)\right).
\end{align*}
% where $Eval(\cdot)$ refers to the eigenvalues. 

\end{theorem}

\begin{proof}
We begin by deriving the individual bound for $\Lambda_k$ such that $\Lambda_k \leq \bar{\lambda}_k I$. Consider the case for $\gamma_k=0$, the update equation can be written as:
\begin{equation}
\label{eq:lambdaUpdateGamma0}
    \Lambda_k = (A-BK)\Lambda_{k-1}(A-BK)^T.
\end{equation}
This can be simply bounded as $(A-BK)\Lambda_{k-1}(A-BK)^T\leq\bar{k}^2\bar{\lambda}_{k-1}$. Now consider the case $\gamma_k=1$. The update equation can be written as:
\begin{multline}
\label{eq:lambdaUpdateGamma1}
    \Lambda_k = (A-BK)\Lambda_{k-1}(A-BK)^T \\
    + \Sigma_k^- C^T(C\Sigma_k^- C^T + R)^{-1} C \Sigma_k^-.
\end{multline}
The term $(A-BK)\Lambda_{k-1}(A-BK)^T$ is the same as \eqref{eq:lambdaUpdateGamma0}, and can be bounded in the same way. In order to bound the second term, we must first  derive bounds on $\Sigma_k^-$, the covariance from the a priori filter update. This can be bounded in terms of the previously defined bounds on the a posteriori update, $\underline{p}_k$ and $\bar{p}_k$. From \eqref{eq:ETcovupdate} we can generate the bounds:
\begin{equation}
\label{eq:proofEq0}
    \left( \underline{a}^2\underline{p}_{k-1} + \underline{q} \right) I \leq \Sigma_k^- \leq \left(\bar{a}^2\bar{p}_{k-1} + \bar{q}\right) I
\end{equation}
Next, examine the term $(C\Sigma_k^- C^T + R)^{-1}$. We require the following lemma to calculate the bound:
\begin{lemma}
\label{lem:lemma2}
    Let $\text{X}\in\mathbb{R}^{n\times n}$ be bounded such that $\text{X}\geq \underline{\mathrm x}I$. Then, $\text{X}^{-1}\leq 1/\underline{\mathrm x} I$
\end{lemma}
Applying Lemma \ref{lem:lemma2} we obtain: 
\begin{equation}
\label{eq:proofEq1}
(C\Sigma_k^- C^T + R)^{-1} \leq (\underline{c}^2(\underline{p}_{k-1}\underline{a}^2 + \underline{q}) + \underline{r})^{-1}I    
\end{equation}
Using \eqref{eq:proofEq1} and \eqref{eq:proofEq0}, we can write the full bound in \eqref{eq:upperboundlambda}. It is simple to see that this bound is larger than the bound for the case $\gamma_k=0$, and therefore is the true upper bound for all triggering conditions.

We use similar reasoning to gererate bounds on bounds on $\Sigma_k$. For the case $\gamma_k=1$, \eqref{eq:ETaposterioriUpdate} becomes the standard KF equations. Therefore, the upper and lower bounds for the case $\gamma_k=1$ are the same KF bounds derived in \cite{Li2016}: 
\begin{equation}
\left( \frac{1}{\underbar{q}} + \frac{\bar{c}^2}{\underbar{r}} \right)^{-1} I \leq \Sigma_{k,\gamma_k=1} \leq \left(\bar{p}_{k-1}\bar{a}^2 + \bar{q}\right)I
\end{equation}

We derive the bounds for the case $\gamma_k=0$ based on the inverse form of the covariance update equation \eqref{eq:ETaposterioriUpdate} for $\gamma_k=0$:
\begin{equation}
\begin{split}
    &\Sigma_{k,\gamma_k=0} = \\
    &\left( \left(\Sigma_k^-\right)^{-1} + \beta C_k^T \left(  R_k + \left(1-\beta\right)C_k\Sigma_k^- C_k^T \right)^{-1} C_k \right)^{-1}
    \nonumber
\end{split}
\end{equation}
Under straightforward manipulation and application of Lemma \ref{lem:lemma2}, this yields the bounds:
\begin{align}
    &\left( \frac{1}{\underbar{q}} + \frac{\beta\bar{c}^2}{\underbar{r} + (1-\beta)\underline{c}^2\underbar{q}} \right)^{-1} I \leq \Sigma_{k,\gamma_k=0} \\
    & \leq \left( \frac{1}{\bar{p}_{k-1}\bar{a}^2 + \bar{q}} + \frac{\beta\underline{c}^2}{\bar{r} + \left( 1-\beta \right)\bar{c}^2\left( \bar{a}^2\bar{p}_{k-1}+\bar{q}\right) } \right)^{-1} I
\end{align}
Note that the lowest lower bound on $\Sigma_k$ corresponds to the case $\gamma_k=1$. Similarly, the largest upper corresponds to $\gamma_k=0$. These widest bounds are presented in \eqref{eq:upperboundcovariance}, and are guaranteed to bound the ET filter covariance for any triggering condition.

Finally, we can bound the sum of the covariances $\Sigma_k + \Lambda_k$ by the sum of their respective upper bounds: $\Sigma_k + \Lambda_k \leq (\bar{p}_k + \bar{\lambda}_k) I$. Because these represent the largest upper bound and lowest lower bound for any triggering condition at any time step, the recursively calculated sequence of bounds is guaranteed to bound the expected belief for any possible sequence of triggers.
\end{proof}

\section{ET-GBT PLANNING ALGORITHM}
    \label{sec:ET-GBT}
    This section introduces the Event-Triggered Gaussian Belief Trees (ET-GBT) algorithm, an adaptation of the Gaussian Belief Trees (GBT) algorithm in \cite{Ho2022} for ET estimation in order to minimize communication cost. 
% \at{Our main contribution is adapting the existing beleif tree framework to account for the uncertainties introduced by ET, we have not made significant changes to the underlying sampling-based planner.}

\subsection{Gaussian Belief Trees}
% \ml{translate the spoken language in this subsection to a technical language}
We first present a brief overview of the GBT motion planner from \cite{Ho2022}. There, a framework is developed for extending any kinodynamic tree-based motion planner to the belief space, where the edges are still nominal controllers and trajectories, but the nodes are Gaussian beliefs. The algorithm proceeds as follows. First a belief is randomly sampled.  Next, its closest node is computed using the 2-Wasserstein distance and extended by a random control input. The uncertainty covariance is propagated using the technique discussed in Sec.~\ref{sec:KFpropagation}. A new node is only added to the tree if it satisfies the chance constraints of probability of collision with obstacles, which is over-approximated using \cite{Blackmore2011, pairet2021, Park2017}. The process repeats until a solution is found.

% The belief sampling step contains two parts: first the mean is sampled the state space, then the covariance is sampled via spectral decomposition. The technique presented in \cite{Ho2022} ensures that the complete space of covariances is uniformly sampled, preserving the completeness property of the analogous state space planners. The closest node is selected using the 2-Wasserstein distance, which is a metric on probability distributions \cite{Pararetos2019}. Colloquially known as the `earth mover's distance', the Wasserstein metric can be thought of as the minimum effort required to shift the mass of one probability distribution to another.\qh{We don't really need to go into these details probably}

% Finally, validity checking uses the conservative approximation method introduced in \cite{Blackmore2011} for convex obstacles, and \cite{Pairet2021,Park2017} for non-convex obstacles.

% \ml{It's weird that we first present the pseudocode, but don't explain it right away... it's done in subsection B. Maybe change the subsection B to ET-GBT and present the algorithm there, and make subsection tree expansion and CC validity checking subsubsection?}

\subsection{ET-GBT Algorithm}
% The ET-GBT algorithm is presented in Algorithm~\ref{alg:ETGBT}. It follows the same basic steps as GBT, but with a few modifications.

ET-GBT adapts GBT in two fundamental ways: tree expansion and chance constraint validity checking. ET-GBT can be used to identify a valid tree that optimizes for communication cost in \eqref{eq:costfunction}, and satisfies the safety constraints \eqref{eq:cc-obs} and \eqref{eq:cc-goal}.  We optimize for this cost function using SST \cite{Li2016SST}, an asymptotically near-optimal planner. Alg.~\ref{alg:ETGBT} presents the pseudocode for our proposed algorithm.

\subsubsection{Tree Expansion}
\label{sec:tree expansion}
Instead of maintaining and propagating Gaussian beliefs, we propagate the bounds \eqref{eq:upperboundcovariance} on the beliefs under ET per the equations derived in Sec.~\ref{sec:ET Bounds}. %One benefit of using the bounds is increased efficiency; we need only store $3$ scalar values to fully represent the bounds rather than the $O(n^2)$ matrix elements required by the original GBT.
The rest of the tree expansion algorithm follows intuitively from this main representation change. The SampleBelief() function is unchanged from GBT and operates analogously to the state sampler in an RRT search. The SelectNode() function is modified to select the `closest' belief node for extension using the 2-Wasserstein distance metric to the upper bounding belief. SampleDelta() has been added so that each edge of the tree corresponds to a triggering threshold as well as a nominal control input.

% \qh{Technically delta can be thought of as a control input, I don't know if we need to address it, but I have a feeling reviewers might have questions about this}
\begin{algorithm}[t]
\caption{ET-GBT}
\label{alg:ETGBT}
 \SetKwInOut{Input}{Input}\SetKwInOut{Output}{Output}
        \Input{$X$, $U$, $\mathcal{X}_{obs}$, $\mathcal{X}_{goal}$, $N$}
        \Output{Tree $G=(\mathbb{V}, \mathbb{E})$}
 $G \gets (\mathbb{V} \gets \{b_{init}\}, \mathbb{E} \gets \emptyset)$\\
\For{$N$ iterations}{
    $b_{rand} \gets$ SampleBelief()\\
    $\delta_{rand} \gets$ SampleDelta()\\
    $u_{rand} \gets$ SampleControl()\\
    $n_{select} \gets$ SelectNode()\\
    $n_{new} \gets \text{Extend}(n_{select},\delta_{rand})$\\
    \If{ValidPathCheck$(n_{select}, n_{new}, \delta_{rand})$}{
        $\mathbb{V} \gets \mathbb{V} \cup \{n_{new}\}$
        $\mathbb{E} \gets \mathbb{E} \cup \{edge(n_{select}, n_{new}, \delta)\}$
    }
    Prune($\mathbb{V}, \mathbb{E}$)
}
\Return{$G=(\mathbb{V}, \mathbb{E})$}
\end{algorithm}

\subsubsection{Chance Constraint Validity Checking}
\label{sec:validityChecking}
We use an over-approximation to check that the probability of collision is below the safety constraint \eqref{eq:cc-obs}. This allows for very fast constraint checking and preserves the efficiency of the sampling-based algorithm. First, we define the $p_{\safe}$ probability contour as the level set $\mathcal{L}_c=\{s \mid \textbf{b}(x_k)(s)=c\}$ with $c\in\mathbb{R}_{\geq 0}$. This contour is calculated such that the interior of $\mathcal{L}_c$ defines a region (volume), $\mathcal{A} \subset \mathcal{X}$, that contains $p_{\safe}$ probability mass
\begin{equation}
    \int_\mathcal{A} \textbf{b}(x_k)(s)ds = p_{\safe}.
\end{equation}
For a Gaussian distribution, level set $\mathcal{L}_c$ and region $\mathcal{A}$ are ellipsoidal and defined by the eigenvalues and eigenvectors of the covaiance matrix. 
% We define the complement of $\mathcal{A}$ as the region not contained in the set: $\tilde{\mathcal{A}}=\{s \mid s \not\in \mathcal{A}\}$. Therefore the probability mass in $\tilde{\mathcal{A}}$ is $1-p_{\safe}$. 
% If $\mathcal{A}$ and $\mathcal{X}_{obs}$ are non-intersecting, then $\mathcal{X}_{obs} \subset \tilde{\mathcal{A}}$ and $P(x_k \in \mathcal{X}_{obs})$ must be less than $ 1-p_{\safe}$. 
If $\mathcal{A}$ and $\mathcal{X}_{obs}$ are non-intersecting, then $P(x_k \in \mathcal{X}_{obs})$ must be less than $ 1-p_{\safe}$, i.e.,
\begin{equation*}
    \mathcal{A} \cap \mathcal{X}_{obs} = \emptyset \quad \Rightarrow \quad P(x_k \in \mathcal{X}_{obs}) \leq 1-p_{\safe}.
\end{equation*}

The key insight is that the covariance bounds presented in Theorem \ref{theorem:covariance-bound} can be used to calculate the bounding contour that contains all possible contours for $\textbf{b}(x_k)$.
Specifically, the covariance bound matrix is $(\bar{\lambda}_k + \bar{p}_k) I_n$, i.e., diagonal with equal entries; 
% \qh{https://www.cuemath.com/algebra/scalar-matrix/}
%\ml{what is a scalar matrix? Do you mean an $n$-by-$n$ diagonal matrix with equal entries in the diagonal, i.e., $r_k I_n$?}
hence, the contour is an $n$-dimensional sphere (n-sphere) with radius $r_k=t_\alpha(\bar{\lambda}_k + \bar{p}_k)$, where $t_\alpha$ is computed from the quantile function $\phi^{-1}$ of the n-dimensional Gaussian distribution such that $t_\alpha = -\phi^{-1}(0.5 \, p_{\safe})$.

Therefore, evaluation of Constraint \eqref{eq:cc-obs} can be efficiently computed by checking $\mathcal{A} \cap \mathcal{X}_{obs} = \emptyset$, which corresponds to checking for intersections of the obstacles with the n-sphere.
% , or whether the n-sphere completely contains an obstacle}
Evaluation of \eqref{eq:cc-goal} can be done similarly by determining whether the $p_{\safe}$ contour is completely enclosed by the goal region, i.e., $\mathcal{A} \subseteq \mathcal{X}_{goal}$.
%\ml{This lonely sentence should be part of the paragraph that describes the method for checking the constraints (the 2nd paragraphs above).  The conservatism discussion should come after discussing the method.}

%\ri{would be helpful to explicitly extend this to the calculation of the probability of being the goal as well}

We note that this method of checking for Constraints \eqref{eq:cc-obs} and \eqref{eq:cc-goal} introduces two main sources of conservatism. The first is inherent to using the probability contour; 
%Unfortunately, this method can be very conservative. 
while non-intersection of $\mathcal{A}$ and $\mathcal{X}_{obs}$ implies $P(x_k \in \mathcal{X}_{obs}) \leq 1 - p_{\safe}$, intersection does not imply that $P(x_k \in \mathcal{X}_{obs}) > 1 - p_{\safe}$.
% However, since we consider any intersection as a violation, situations where an intersection occurs despite the fact that very little probability mass (less than $1- p_{\safe}$) is contained in the obstacle are conservatively considered invalid.
% We consider any intersection, no matter how small, as a violation. Therefore, situations where the contour barely overlaps are still considered violating, despite the fact that very little probability mass is contained in the obstacle.
% \ml{why is it bad to consider ``barely overlaps'' a violation?}. 
The second arises from bounding the true elliptical covariances with a spherical bound. There may be cases that the spherical bound intersects with obstacles, but the true elliptical covariance does not. This is compounded by the assumptions that $\gamma_k=0$ or $\gamma_k=1$ for all $k$ when computing the bounds. This does not occur in reality, so a bound calculated on these extremes is consequentially conservative. 
% \ml{we provide two arguments for conservatism of the approach:  (1) intersection of the n-sphere with obstacles, and (2) the n-sphere is an over approximation of the actual ellipse.  I see the point of Argument (2), but not (1).  } 
% \at{tweaked up prior paragrah}

\subsection{Correctness, Completeness, and Optimality}

% \qh{Show probabilistic completeness, correctness, w.r.t. bounds}
% \qh{Not sure if we can do optimality}

In this section, we show that ET-GBT (i) is sound and probabilistically complete with respect to the conservative constraint-validity-checking method presented above, and (ii) satisfies the conditions for asymptotic (near-)optimality of kinodynamcis planners such as SST or SST* \cite{Li2016SST}.

%%\subsubsection{Probabilistic Completeness}

\begin{lemma}[Correctness]
    \label{lemma:correctness}
    Let $G = (\mathbb{V}, \mathbb{E})$ be the tree obtained from ET-GBT for some iterations $N \in \mathbb{N}$. Consider any tree node $v \in \mathbb{V}$. This node is guaranteed to satisfy the chance constraints in \eqref{eq:cc-obs}. Further, if the algorithm returns a solution, the final tree node is guaranteed to satisfy \eqref{eq:cc-goal}.
\end{lemma}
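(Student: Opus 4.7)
The plan is a short induction over the order in which nodes are appended to the tree, chaining two monotonicity facts: (i) the spherical covariance $(\bar{\lambda}_k + \bar{p}_k) I_n$ produced by Theorem \ref{theorem:covariance-bound} dominates the true expected belief covariance $\Sigma_k + \Lambda_k$ in the Loewner order, and (ii) Loewner dominance of covariances implies set-containment of the corresponding $p_{\safe}$-level regions used in Sec. \ref{sec:validityChecking}.

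I would first record the containment fact: for two Gaussians sharing a mean $\mu$, if $\Sigma_1 \succeq \Sigma_2$ then the $p_{\safe}$-confidence region of $\mathcal{N}(\mu, \Sigma_2)$ is contained in that of $\mathcal{N}(\mu, \Sigma_1)$. This is immediate because $\Sigma_1 \succeq \Sigma_2$ implies $\Sigma_2^{-1} \succeq \Sigma_1^{-1}$, and the confidence region is a sublevel set of the Mahalanobis form $(x-\mu)^T \Sigma^{-1} (x-\mu)$ at the same quantile $t_\alpha$. Applied to $\textbf{b}(x_k) = \mathcal{N}(\check{x}_k, \Sigma_k + \Lambda_k)$ and its spherical bound of radius $r_k = t_\alpha(\bar{\lambda}_k + \bar{p}_k)$, this shows that the region $\mathcal{A}$ checked by the algorithm contains the true $p_{\safe}$-mass region of $\textbf{b}(x_k)$. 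Note that because the bounding matrix is already a scalar multiple of $I_n$, the spectral bound from Theorem \ref{theorem:covariance-bound} and Loewner dominance coincide, so this step is free.

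For the obstacle claim, I would induct on $|\mathbb{V}|$. The root $b_{init}$ satisfies \eqref{eq:cc-obs} by problem assumption. For the inductive step, a node $n_{new}$ is appended only if ValidPathCheck returns true, meaning $\mathcal{A}_k \cap \mathcal{X}_{obs} = \emptyset$ at every time index $k$ along the edge from $n_{select}$ to $n_{new}$; combined with the containment fact and \eqref{eq:Pcoll}, this forces $P(x_k \in \mathcal{X}_{obs}) \leq 1 - p_{\safe}$, establishing \eqref{eq:cc-obs} at the new node. The goal claim is dual: when ET-GBT returns a solution, the terminal node was accepted because $\mathcal{A}_T \subseteq \mathcal{X}_{goal}$, so the true $p_{\safe}$-region lies inside $\mathcal{X}_{goal}$ and \eqref{eq:cc-goal} follows from \eqref{eq:Pgoal}. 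The only genuinely load-bearing step is the Loewner-dominance invocation, which I expect to be the main obstacle only insofar as it relies on Theorem \ref{theorem:covariance-bound} being stated as a true matrix bound rather than as a looser scalar or worst-case-variance bound; everything else is bookkeeping over the tree and a one-line Mahalanobis argument.
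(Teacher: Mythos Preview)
Your proposal is correct and follows essentially the same approach the paper takes: the paper's proof is a one-liner stating that the claim ``follows directly from our method of chance constraint checking using the (conservative) covariance bound from Theorem~\ref{theorem:covariance-bound},'' and your Loewner-dominance / Mahalanobis containment argument is precisely the unpacking of why that bound is conservative in the sense needed by Sec.~\ref{sec:validityChecking}. The induction over $|\mathbb{V}|$ is slightly more structure than necessary---since every non-root node is gated by \textsc{ValidPathCheck}, a direct argument suffices---but it is harmless and the substance matches.
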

% \begin{proof}[Proof Sketch]
\noindent
The proof of this lemma follows directly from our method of chance constraint checking using the (conservative) covariance bound from Theorem~\ref{theorem:covariance-bound}.
% \end{proof}

\begin{theorem}[Probabilistic Completeness]
    ET-GBT is probabilistically complete with respect to the conservative upper bound, i.e., if there exists a solution using the upper bound, ET-GBT will find it almost surely as iterations $N \rightarrow \infty$.
\end{theorem}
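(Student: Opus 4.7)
The plan is to leverage the probabilistic completeness proofs already established for GBT~\cite{Ho2022} and the underlying SST planner~\cite{Li2016SST}. ET-GBT departs from GBT in only two respects: (i) each tree node stores the upper-bound spherical covariance $(\bar\lambda_k+\bar p_k)I_n$ from Theorem~\ref{theorem:covariance-bound} instead of a true belief covariance, and (ii) each edge samples a triggering threshold $\delta\in\Delta=\mathbb{R}_{>0}$ in addition to a control $u$. Because the upper-bound covariance evolves \emph{deterministically} from its parent given $(u,\delta)$, an ET-GBT node is just an augmented state in a kinodynamic tree, and the task reduces to verifying the sampling-density hypothesis in the enlarged $(u,\delta)$ action space under which the generic completeness arguments for SST-style planners apply.

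Next I would fix an arbitrary reference plan $(\check U_0^{T*},\check X_0^{T*},\Delta_0^{T*})$ that is valid with respect to the upper bound (the hypothesis of the theorem), and show that ET-GBT approximates it arbitrarily well. By Lemma~\ref{lemma:correctness}, any node the algorithm actually retains is valid, so approximation automatically produces a valid branch. For the approximation itself I would argue inductively in $k$: given that SelectNode (which uses the 2-Wasserstein metric on the bounding belief) already has a predecessor node lying within $\epsilon$ of $\check x_{k-1}^*$, the probability that SampleControl and SampleDelta jointly produce $(u,\delta)$ within $\epsilon$ of $(\check u_{k-1}^*,\delta_{k-1}^*)$ is strictly positive at every iteration. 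Continuity of the propagation map in $(u,\delta)$, which follows from the linearity of the dynamics and the smoothness of $\beta(\delta)$ in the covariance update, then turns this input-space neighborhood into a 2-Wasserstein neighborhood of the reference belief at step $k$. A Borel--Cantelli-style argument over i.i.d.\ iterations shows that, almost surely as $N\to\infty$, a sequence of $T$ such lucky extensions occurs in order, yielding a tree branch that terminates inside the $p_{\safe}$ contour of the goal and therefore satisfies the goal chance constraint \eqref{eq:cc-goal}.

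The main obstacle is the noncompactness of the threshold space $\Delta=\mathbb{R}_{>0}$, which rules out uniform sampling. I would resolve this by assuming, as is standard in completeness arguments over continuous action spaces, that SampleDelta is drawn from a distribution whose density is positive on every open subset of $\mathbb{R}_{>0}$; since the reference thresholds $\delta_0^*,\ldots,\delta_T^*$ lie in some bounded interval $[\delta_{\min},\delta_{\max}]$, this gives the required positive hitting probability on each neighborhood $B_\epsilon(\delta_{k-1}^*)$. A secondary subtlety is that one must verify that SST's pruning rule, inherited from GBT, does not discard a node that is needed to complete the almost-sure branch; this follows because pruning only removes dominated nodes (higher cost and close in state), and along the approximating branch the cost converges to that of the reference plan, which is by hypothesis achievable. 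With these pieces, the remaining steps are routine adaptations of the GBT and SST completeness proofs.
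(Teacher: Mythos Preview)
Your proposal is correct and follows essentially the same approach as the paper, which simply states that the result ``follows directly from Lemma~\ref{lemma:correctness} and the probabilistic properties of GBT.'' You have filled in considerably more detail than the paper provides---the continuity of the propagation map in $(u,\delta)$, the positive-density requirement on SampleDelta to cope with the noncompact threshold space, and the pruning-compatibility check---none of which the paper spells out, but all of which are consistent with its one-line deferral to GBT/SST completeness.
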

% \begin{proof}[Proof Sketch]
\noindent
The proof of this theorem follows directly from Lemma~\ref{lemma:correctness} and the probabilistic properties of GBT.
% \end{proof}

% \subsubsection*{Asymptotic Optimality}
% \qh{Not sure if we can show this really. We're almost there though...}

%\at{Asymptotic optimality relates to the optimization problem of \eqref{eq:cost minimization problem}. Due to space limitations, we refer the reader to the extended version of this paper for a detailed proof, \cite{Theurkauf2022extended}. }

For asymptotic optimality,
we first show that the cost function in \eqref{eq:costfunction} satisfies the conditions for an admissible cost function for asymptotic near-optimal algorithms according to [21]. Specifically, it is straightforward to see that \eqref{eq:costfunction} satisfies additivity, monotinicity, and non-degeneracy. In the following, we prove that it is also Lipschitz continuous in $\delta$.

\begin{figure*}
    \centering
    \begin{subfigure}{0.28\textwidth}
    \includegraphics[width=\textwidth]{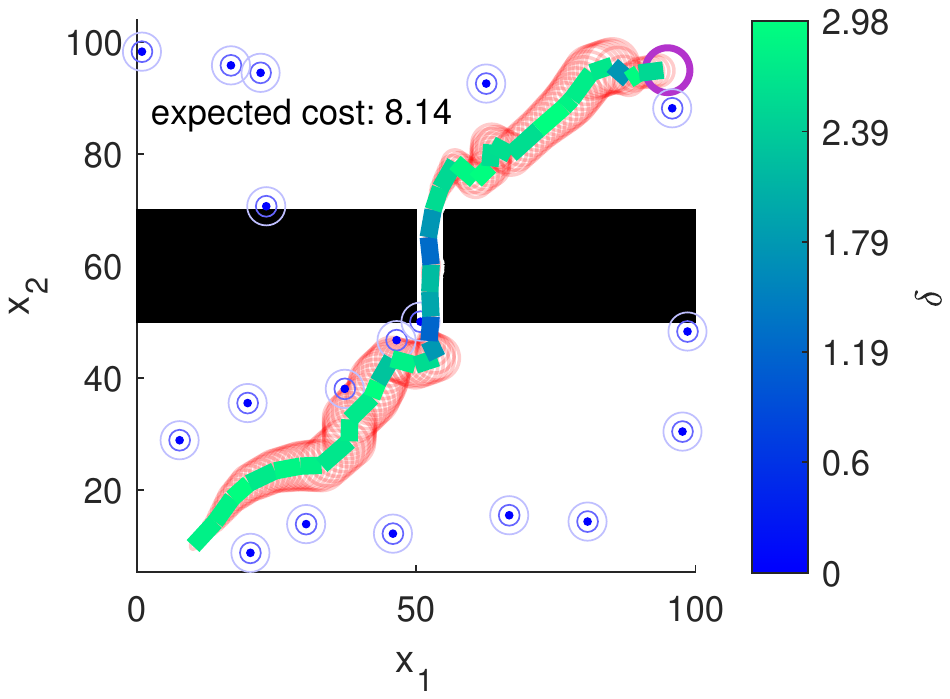}
    \vspace{-7mm}
    \caption{}
    \label{fig:Sys1Strategy}
    \end{subfigure}
    \hfill
    \begin{subfigure}{0.22\textwidth}
    \includegraphics[width=\textwidth]{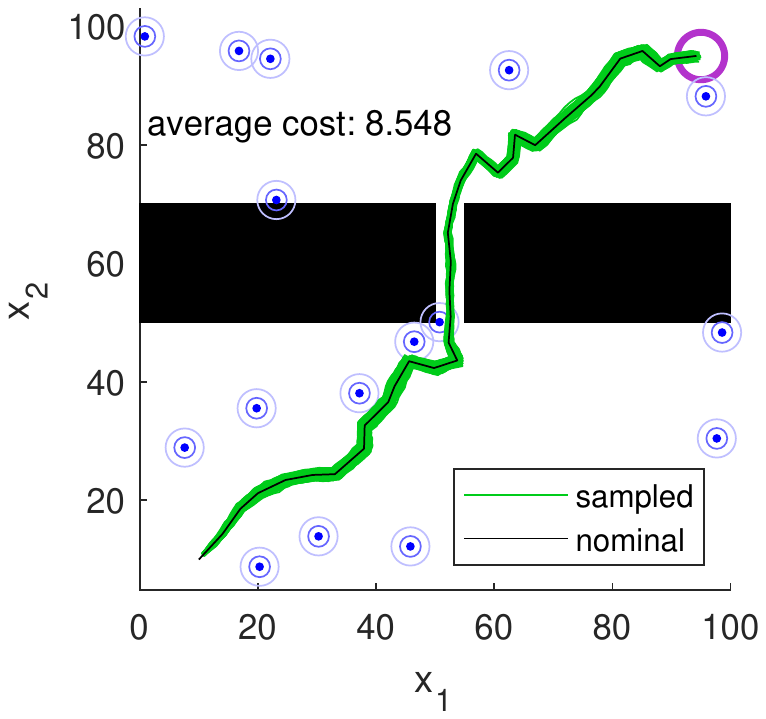}
    \vspace{-7mm}
    \caption{}
    \label{fig:Sys1MCRuns}
    \end{subfigure}
    \hfill
    \begin{subfigure}{0.2\textwidth}
    \includegraphics[width=\textwidth]{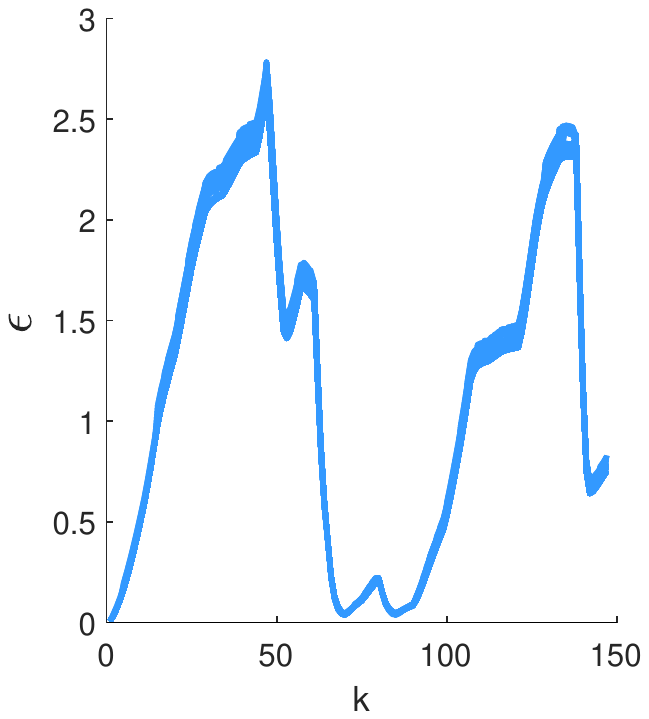}
    \vspace{-7mm}
    \caption{}
    \label{fig:Sys1covBounds}
    \end{subfigure}
    \hfill
    \begin{subfigure}{0.28\textwidth}
    \includegraphics[width=\textwidth]{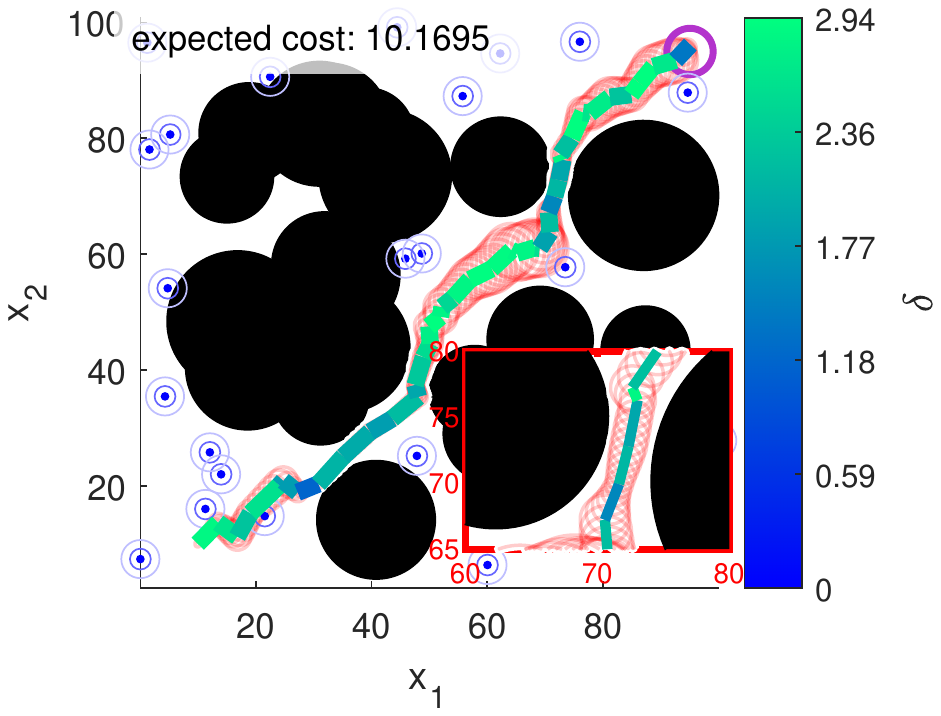}
    \vspace{-7mm}
    \caption{}
    \label{fig:scenario2_nominal}
    \end{subfigure}
\caption{(a) Nominal trajectory for 2D system, with $\check{X}_0^T$ plotted as lines colored by $\Delta_0^T$, and $99\%$ contours plotted as red circles.  The blue concentric circles represent the remote sensors. (b) $50$ sampled MC runs. (c) $50$ sampled MC runs of $\epsilon$. (d) Nominal trajectory for 2D system with random obstacles}
\vspace{-3.5mm}
\end{figure*}

\begin{lemma}[Lipschitz Continuity]
    The cost function in \eqref{eq:costfunction} is Lipschitz Continuous.
\end{lemma}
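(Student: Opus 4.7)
The plan is to reduce Lipschitz continuity of the full cost $J_T(\Delta_0^T) = \sum_{k=1}^T c_m \Gamma(\delta_k)$ to Lipschitz continuity of the scalar expected triggering rate $\Gamma : \mathbb{R}_{>0} \to [0,1]$, and then establish the latter by showing that $\Gamma'$ is uniformly bounded. Since $J_T$ is a finite weighted sum of functions $\delta_k \mapsto c_m \Gamma(\delta_k)$ acting on separate coordinates, a uniform bound $|\Gamma'(\delta)| \le M$ immediately yields a Lipschitz constant $c_m M$ per coordinate for $J_T$ (and, via the mean-value inequality applied coordinate-wise, a global Lipschitz constant $c_m M \sqrt{T}$ with respect to any $\ell_2$ norm on $\Delta_0^T$).

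First I would differentiate $\Gamma(\delta) = 1 - [1 - 2\mathcal{Q}(\delta)]^m$. Using $\mathcal{Q}'(\delta) = -\frac{1}{\sqrt{2\pi}} e^{-\delta^2/2}$ and the chain rule,
\begin{equation*}
    \Gamma'(\delta) = -m\,[1 - 2\mathcal{Q}(\delta)]^{m-1}\cdot\bigl(-2\mathcal{Q}'(\delta)\bigr)
    = -\frac{2m}{\sqrt{2\pi}}\,[1 - 2\mathcal{Q}(\delta)]^{m-1} e^{-\delta^2/2}.
\end{equation*}
Next I would bound each factor on the admissible domain $\delta \in \mathbb{R}_{>0}$. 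Since $\mathcal{Q}(\delta) \in (0,\tfrac{1}{2})$ for $\delta > 0$, we have $0 < 1 - 2\mathcal{Q}(\delta) < 1$, and therefore $[1 - 2\mathcal{Q}(\delta)]^{m-1} \in (0,1]$. The Gaussian factor satisfies $e^{-\delta^2/2} \le 1$. Combining these,
\begin{equation*}
    |\Gamma'(\delta)| \;\le\; \frac{2m}{\sqrt{2\pi}} \;=:\; M,
\end{equation*}
uniformly in $\delta$.

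Finally I would invoke the mean-value theorem: for any $\delta, \delta' \in \mathbb{R}_{>0}$,
\begin{equation*}
    |\Gamma(\delta) - \Gamma(\delta')| \le M\,|\delta - \delta'|,
\end{equation*}
so $\Gamma$ is Lipschitz. Summing $T$ copies and scaling by $c_m$ yields Lipschitz continuity of $J_T$ on $\mathbb{R}_{>0}^T$ with constant at most $c_m M\sqrt{T}$ (or $c_m M$ per coordinate), completing the proof.

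I do not expect any serious obstacles here; the only minor subtlety is verifying that the bound on $\Gamma'$ holds \emph{uniformly} on the open domain $(0, \infty)$ rather than merely on compact subintervals. This is handled by the two elementary inequalities above, both of which are global. Since the proofs of additivity, monotonicity, and non-degeneracy are immediate from the form $J_T = \sum c_m \Gamma(\delta_k)$ with $\Gamma \ge 0$ and strictly positive away from $\delta = \infty$, the lemma establishes the last remaining hypothesis needed to invoke the asymptotic near-optimality guarantees of SST/SST* from \cite{Li2016SST}.
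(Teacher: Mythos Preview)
Your proposal is correct and follows essentially the same route as the paper: reduce to a single stage cost, observe that $\Gamma$ is differentiable with bounded derivative on $\mathbb{R}_{>0}$, and conclude Lipschitz continuity. You simply carry out explicitly the derivative computation and the uniform bound that the paper asserts in one line.
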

\begin{proof}
It is enough to show the Lipschitz continuity of one time step, since the cost function is a sum of stage costs at each time step \eqref{eq:costfunction}. For $\delta \in \positivereals$, we see that $\Gamma(\delta_t)=1-\left[ 1 - 2\mathcal{Q}(\delta_k)\right]^m,$ is everywhere differentiable, and also that $J_t$ has a bounded first derivative, since $\delta \in \positivereals$. Therefore, it is Lipschitz continuous in $\delta$.
\end{proof}

% \at{
\noindent
Finally, we show that the belief bound dynamics is Lipschitz continuous w.r.t. the triggering threshold ($\delta$), controls, and the belief state. Continuity w.r.t. belief and controls follows directly from the Lipschitz continuity of the linear dynamics. For $\delta$, we show that the derivative of the belief bound w.r.t. $\delta$ is bounded (see \cite{Theurkauf2022extended} for full proof).
%\ml{where do we show this?}
%\qh{I think we require that $\delta \in \mathbb{R}_{> c}$ where $c > 0$ arbitrarily low for the Lipschitz continuity to be true. $\positivereals$ might be problematic for $\lim_{\delta \rightarrow 0}$}. 
Under these conditions, the ET-GBT planner inherits the asymptotic optimality properties of the underlying sampling based planner (similar to GBTs 
%\ml{GBT???}
in \cite{Ho2022}).

\section{EVALUATIONS}
    \label{sec:eval}
    % \begin{figure*}
%     \centering
%     \begin{subfigure}{0.28\textwidth}
%     \includegraphics[width=\textwidth]{}
%     \caption{}
%     \label{fig:Sys1Strategy}
%     \end{subfigure}
%     \hfill
%     \begin{subfigure}{0.22\textwidth}
%     \includegraphics[width=\textwidth]{}
%     \caption{}
%     \label{fig:Sys1MCRuns}
%     \end{subfigure}
%     \hfill
%     \begin{subfigure}{0.2\textwidth}
%     \includegraphics[width=\textwidth]{figures/upperBoundCheck1.pdf}
%     \caption{}
%     \label{fig:Sys1covBounds}
%     \end{subfigure}
%     \hfill
%     \begin{subfigure}{0.28\textwidth}
%     \includegraphics[width=\textwidth]{}
%     \caption{}
%     \label{fig:scenario2_nominal}
%     \end{subfigure}
% \caption{(a) Nominal trajectory for 2D system, with $\check{X}_0^T$ plotted as lines colored by $\check{\Delta}_0^T$, and $99\%$ contours plotted as red circles. (b) $50$ sampled MC runs. (c) $50$ sampled MC runs of $\epsilon$. (d) Nominal trajectory for 2D system with random obstacles}
% \vspace{-7mm}
% \end{figure*}

We implemented ET-GBT with the asymptotically near-optimal planner SST \cite{Li2016SST} in the Open Motion Planning Library (OMPL)~\cite{sucan2012the-open-motion-planning-library}.
We evaluated our algorithm on two systems and two environments, generating METC plans for each scenario. We ran Monte Carlo (MC) simulations to collect statistics on: resource cost, collision probability and probability of reaching the goal. Benchmarking shows improved optimality, i.e., lower cost with longer runtimes. We also provide a comparison with the method in \cite{Theurkauf2022}. 
To emphasize the cost savings for the remote sensors, we considered $c_{m}=1$ in cost function \eqref{eq:costfunction}.
All benchmarks were computed single-threaded on a $3.6$ GHz  CPU with 16 GB RAM. 

\emph{Simple 2D System:}
We study the robotic system from \cite{Bry2011}, with dynamics $x_{k+1} = x_k + u_k + w_k$ and measurements $y_k = x_k + v_k$. The noise is distributed as $w_k \sim \mathcal{N}(0,0.1^2I)$ and $v_k \sim \mathcal{N}(0,0.1^2I)$.

We first consider an environment with a narrow corridor, shown in Figure \ref{fig:Sys1Strategy}, and constraint $p_{\safe}=0.99$. 
We ran ET-GBT for $1$ minute, generating the METC plan shown in Figure \ref{fig:Sys1Strategy}. 
Note the motion plan consists of a nominal state trajectory for the robot and a triggering threshold $\delta$ plan for the sensors. 
In Figure \ref{fig:Sys1Strategy}, the sensors are shown as concentric circles, and the robot's state trajectory is colored by the corresponding triggering threshold for the sensors. 
%The nominal trajectory,$\check{X}_0^T$, is shown as green and blue colored lines in Figure \ref{fig:Sys1Strategy}, where the color represents the triggering plan $\Delta_0^T$. The figure also includes the bounding $p_{safe}$-probability contour at each point on the nominal trajectory as a red circle; note that this circle does not intersect with the obstacles at any point.
We see the $\delta$ threshold relax in regions far from obstacles, indicating a lower trigger rate, causing the belief bounds to expand, and accruing less cost. As the robot traverses the narrow corridor, the threshold tightens, causing the belief bounds to contract in order to satisfy the safety constraint. The threshold decreases at the end to ensure the robot terminates within the goal region. %It is clear that the plan satisfies the constraints with the minimal cost.

The generated nominal plan was validated by running $3000$ MC simulations, none of which collided with obstacles. A sampling of 50 of these trajectories is shown in Figure \ref{fig:Sys1MCRuns}. This affirms that the chance constraint has been met, but also indicates the bounds are very conservative. 
%\ri{extended version - would be nice to also see same experiment for a lower $p_{safe}$ result in a lower MC success rate}

We verified the derived upper bound, $\Lambda_k+\Sigma_k\leq(\bar{\lambda}_k+\bar{p}_k)I$ by checking that the matrix difference, $(\bar{\lambda}_k+\bar{p}_k)I-(\Lambda_k+\Sigma_k)$, is positive semi-definite. This condition is checked by ensuring the eigenvalues of the matrix difference, denoted by $\epsilon = Eigenval((\bar{\lambda}_k+\bar{p}_k)I-(\Lambda_k+\Sigma_k))$, are all positive, as shown in Figure \ref{fig:Sys1covBounds}. Note that while the bound is valid, it is very loose, contributing to conservativeness in collision checking. A future direction is to investigate a tighter bound and reduce the conservatism of our algorithm. 
%\ri{could be huge for MC experiments; show actual cost during executions as well! potentially even a comparison with full KF communication costs to tie it back to the resource consumption motivation}

% \ml{new subsection?}
\emph{Benchmarks:}
Next, we performed benchmark analysis to demonstrate that the optimality improves with increased computation time. In addition to the 2D system, we evaluated our algorithm on a second order unicycle system with dynamics $\dot{x} = v \cos{(\varphi)}, \dot{y} = v \sin{(\varphi)}, \dot{\varphi} = \omega, \dot{v} = a$, and feedback linearized according to \cite{unicycle}.

We consider two environments: the narrow corridor environment and randomize environments. Each instance of the randomized environment contains $15$ circular obstacles, with obstacle centers, $c_o$ distributed as $c_o \sim \mathcal{U}(0,100)$, and obstacle radius, $r_o$, distributed as $r_o\sim\mathcal{N}(10,2)$.
%chosen from a uniform random distribution over the $100\times100$ environment. The radius of each obstacle was chosen from a Gaussian distribution with mean $\mu = 10$ and standard deviation $\sigma = 2$. 
A sample environment and the generated plan for the 2D system are shown in Figure~\ref{fig:scenario2_nominal}. For both environments, we conducted $100$ trials with computation times from $10$ seconds to $200$ seconds. The results of this analysis are presented in Table~\ref{tab:benchmarkingLinear}. In each case, the average cost of the plans decreases with computation time, showing asymptotic optimal behavior.

% \begin{table}[h]
%     \begin{center}
%     \caption{Narrow corridor}
%     % \label{tab:2Dwinding}
%     \vspace{-2mm}
%     \begin{tabular}{| c |  c  |}
%     \hline
%     Max Planning Time (s) & Cost\\
%     \hline
%          10 & 38.47 $\pm$ 10.36 \\
%          25 & 19.76 $\pm$ 8.36 \\
%          50 & 11.40 $\pm$ 2.78 \\
%          100 & 8.47 $\pm$ 2.04 \\
%          200 & 8.13 $\pm$ 1.31 \\
%     \hline
%     \end{tabular}
%     \end{center}
%     \vspace{-4mm}
% \end{table}

% \begin{table}[h]
%     \begin{center}
%     \caption{Randomized Obstacles}
%     % \label{tab:2Dwinding}
%     \vspace{-2mm}
%     \begin{tabular}{| c |  c  | c | c |}
%     \hline
%     Max Planning Time (s) & Cost & \\
%     \hline
%          $10$ &  $10.7 \pm 11.6$ \\
%          $25$ & $7.53 \pm 9.67$ \\
%          $50$ & $6.61 \pm 9.14$  \\
%          $100$ & $5.94 \pm 8.89$\\
%          $200$ & $5.54\pm 8.7$\\
%     \hline
%     \end{tabular}
%     \end{center}
%     \vspace{-4mm}
% \end{table}

\begin{table}[h]
    \centering
    \caption{Benchmarking results.}
    \label{tab:benchmarkingLinear}
    \scalebox{1.0}{
    \tabcolsep=0.11cm
    \begin{tabular}{l|c|c|c|c}
     & \multicolumn{2}{c|}{2D System Cost} & \multicolumn{2}{c}{Unicycle System Cost}\\
    Time (s)  & {Narrow} & {Random} & {Narrow} & {Random}
    \\\hline
     $10$ & $38.47 \pm 1.0$ & $10.7 \pm 1.2$ & $60.2 \pm 2.2$ &  $105.3 \pm 12.4$ \\
         $25$ & $19.76 \pm 0.8$ &  $7.53 \pm 0.9$ & $34.8 \pm 0.9$ & $54.4 \pm 8.7$ \\
         $50$ & $11.40 \pm 0.3$ &$6.61 \pm 0.91$ & $25.4 \pm 0.6$ &  $31.9 \pm 2.9$ \\
         $100$ & $8.47 \pm 0.2$ &$5.94 \pm 0.89$ & $20.5 \pm 0.4$ & $25.0 \pm 2.0$ \\
         $200$ & $8.13 \pm 0.1$& $5.54 \pm 0.87$ & $18.3 \pm 0.3$ & $21.5 \pm 1.6$
    \end{tabular}
    }
    \vspace{-2mm}
\end{table}

\emph{Comparison to Pareto-optimal Method in \cite{Theurkauf2022}:}
To showcase the efficiency of our approach, we compare our method agaist the one proposed in \cite{Theurkauf2022} by considering
the experiment for open trajectory scenario in \cite{Theurkauf2022}.  That method computes a Pareto point with probability of goal and collision $0.95$ and $0.0$ respectively, and cost $30.95$. The computation time is in the order of hours.
We ran ET-GBT on the same trajectory and discrete $\delta$ values 100 times. 
ET-GBT returns a plan with very similar average expected cost of $30.56$ within $0.5$ seconds.  The computation time for ET-GBT is significantly smaller than the method in \cite{Theurkauf2022}, which relies on MC sampling to build an MDP abstraction. We note that, since the trajectory is far from any obstacles for this scenario, the conservativeness of ET-BGT does not have a profound effect on the result. Nevertheless, this comparison validates the speed and effectiveness of our optimization approach.  %%for cost optimization.
    
\section{CONCLUSION}
    \label{sec:conclusion}
    This paper considers the problem of generating METC plans that satisfy safety constraints while minimizing communication costs. We develop a novel method of propagating the expected belief under ET estimation, as well the corresponding covariance bounds. We use these techniques to develop a fast and efficient sampling-based METC planning algorithm. Case studies and benchmarking demonstrate the efficacy, speed, and asymptotic optimality of the algorithm.

These methods are limited by the conservativeness of the approximations, which could be addressed in future work by tightening the covariance bounds, or by developing a less conservative collision checking method.

%%%%%%%%%%%%%%%%%%%%%%%%%%%%%%%%%%%%%%%%%%%%%%%%%%%%%%%%%%%%%%%%%%%%%%%%%%%%%%%
% \addtolength{\textheight}{-10cm}   % This command serves to balance the column lengths
                                  % on the last page of the document manually. It shortens
                                  % the textheight of the last page by a suitable amount.
                                  % This command does not take effect until the next page
                                  % so it should come on the page before the last. Make
                                  % sure that you do not shorten the textheight too much.

% %%%%%%%%%%%%%%%%%%%%%%%%%%%%%%%%%%%%%%%%%%%%%%%%%%%%%%%%%%%%%%%%%%%%%%%%%%%%%%%% REFERENCES
% \newpage
\bibliographystyle{IEEEtran}
\bibliography{refs}

\end{document}